\def\1{\bm{1}}
\def\vx{{\bm{x}}}
\def\vy{{\bm{y}}}
\def\mC{{\bm{C}}}
\DeclareMathAlphabet{\mathsfit}{\encodingdefault}{\sfdefault}{m}{sl}
\SetMathAlphabet{\mathsfit}{bold}{\encodingdefault}{\sfdefault}{bx}{n}
\def\gD{{\mathcal{D}}}
\DeclareMathOperator*{\argmax}{arg\,max}
\algnewcommand{\LeftComment}[1]{\Statex \(\triangleright\) #1}
\theoremstyle{plain}
\newtheorem{theorem}{Theorem}[section]
\newtheorem{proposition}[theorem]{Proposition}
\theoremstyle{definition}
\theoremstyle{remark}
\definecolor{tred}{RGB}{251, 130, 132}
\definecolor{torange}{RGB}{247, 162, 116}
\definecolor{tyellow}{RGB}{251, 218, 140}
\definecolor{tgreen}{RGB}{127, 204, 181}
\definecolor{tblue}{RGB}{89, 177, 215}
\definecolor{insightblue}{RGB}{162, 210, 255}
\definecolor{questionred}{RGB}{255, 175, 204}
\title{LAMDAS: LLM as an Implicit Classifier for Domain-specific Data Selection}
\author{%
Jian Wu\textsuperscript{1,2}\thanks{Equal contribution. This work was done during Jian Wu's internship at Ant Group.}, Hang Yu\textsuperscript{1}$^*$, Bingchang Liu\textsuperscript{1}, Wenjie Yang\textsuperscript{1}, Peng Di\textsuperscript{1}, Jianguo Li\textsuperscript{1}, and Yue Zhang\textsuperscript{2} \\
Email: \{wujian, zhangyue\}@westlake.edu.cn, \\\{hyu.hugo, bingchang.lbc, ywj439780, dipeng.dp, lijg.zero\}@antgroup.com
\\

\vspace{10pt}
$^1$Ant Group\ \ \ $^2$Westlake University\\
\vspace{10pt}
% \hspace{-10pt}\faGithub ~\url{https://github.com/codefuse-ai/xxx}\\
% \hspace{-10pt}~~~~~~~~\includegraphics[width=1em,height=1em]{} ~\url{https://huggingface.co/datasets/codefuse-ai/xxx}\\
}
\begin{document}

\maketitle

\begin{abstract}
Adapting large language models (LLMs) to specific domains often faces a critical bottleneck: the scarcity of high-quality, human-curated data. While large volumes of unchecked data are readily available, indiscriminately using them for fine-tuning risks introducing noise and degrading performance. Strategic data selection is thus crucial, requiring a method that is both accurate and efficient. Existing approaches, categorized as similarity-based and direct optimization methods, struggle to simultaneously achieve these goals. In this paper, we introduce LAMDAS (\textit{L}LM \textit{A}s an i\textit{M}plicit classifier for domain-specific \textit{DA}ta \textit{S}election), a novel approach that leverages the pre-trained LLM itself as an implicit classifier, thereby bypassing explicit feature engineering and computationally intensive optimization process. LAMDAS reframes data selection as a one-class classification problem, identifying candidate data that "belongs" to the target domain defined by a small reference dataset. Extensive experimental results demonstrate that LAMDAS not only exceeds the performance of full-data training using a fraction of the data but also outperforms nine state-of-the-art (SOTA) baselines under various scenarios. Furthermore, LAMDAS achieves the most compelling balance between performance gains and computational efficiency compared to all evaluated baselines.\footnote{Code will be available upon publication.}
\end{abstract}

\section{Introduction}
The unprecedented capabilities of modern large language models (LLMs) rely on their ability to learn from vast and diverse datasets during pre-training, followed by adaptation to specialized domains or tasks via continual pre-training (CPT) and supervised fine-tuning (SFT). However, a key challenge arises when adapting LLMs to new scenarios: while high-quality, human-curated data is often scarce, large volumes of unchecked, automatically collected data, such as web-scraped content, crowd-sourced annotations, or synthetic examples, are readily accessible. Indiscriminately using this unchecked data for CPT or SFT carries the risk of introducing noise, misalignment with the target task, or even harmful patterns, leading to degraded model performance and increased hallucinations~\citep{zhang2023siren,Li2023OneSL}. Furthermore, training LLMs on unnecessarily large datasets incurs significant computational costs, requiring substantial GPU resources that are not readily available to all researchers and practitioners~\citep{he2024shed}. It is therefore imperative to strategically select the most relevant training examples from the vast pool of unchecked candidate data, particularly when working with a small but meticulously curated reference dataset that accurately represents the target domain or task.

The core challenge in domain-specific data selection lies in identifying a method that is both \textbf{accurate} – maximizing LLM performance on the reference dataset after training on the selected data – and \textbf{efficient} – capable of processing massive candidate datasets, especially for CPT. 

\begin{figure*}[htbp]
\centering
%\hfill
\subfloat[CPT]{%
\includegraphics[width=0.4\textwidth,height=0.34\textwidth]{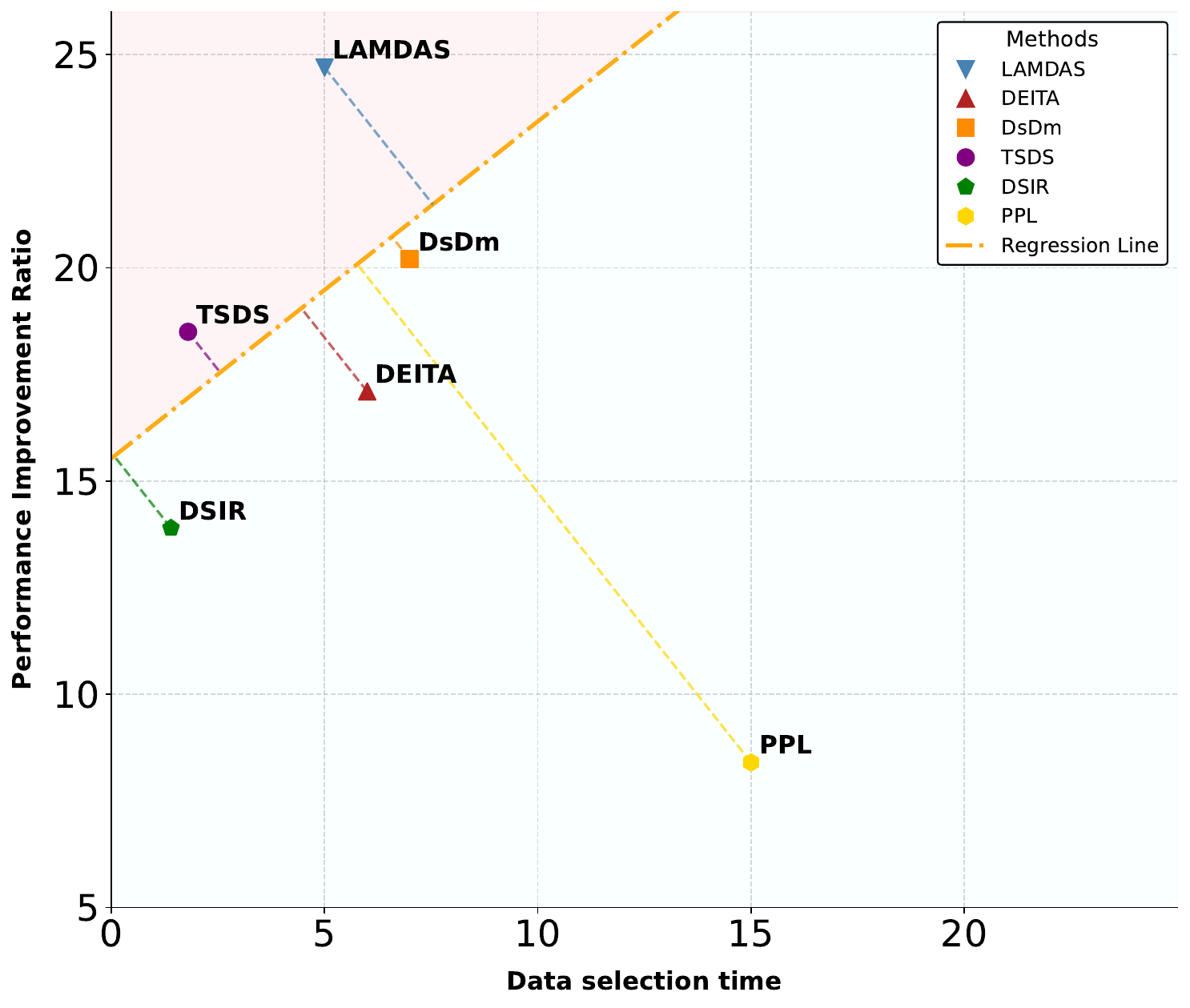}% height = 0.41\textwidth
  \label{fig:cpt}%
}
\hspace{40pt}
%\hfill
\subfloat[SFT]{ \includegraphics[width=0.4\textwidth,height=0.34\textwidth]{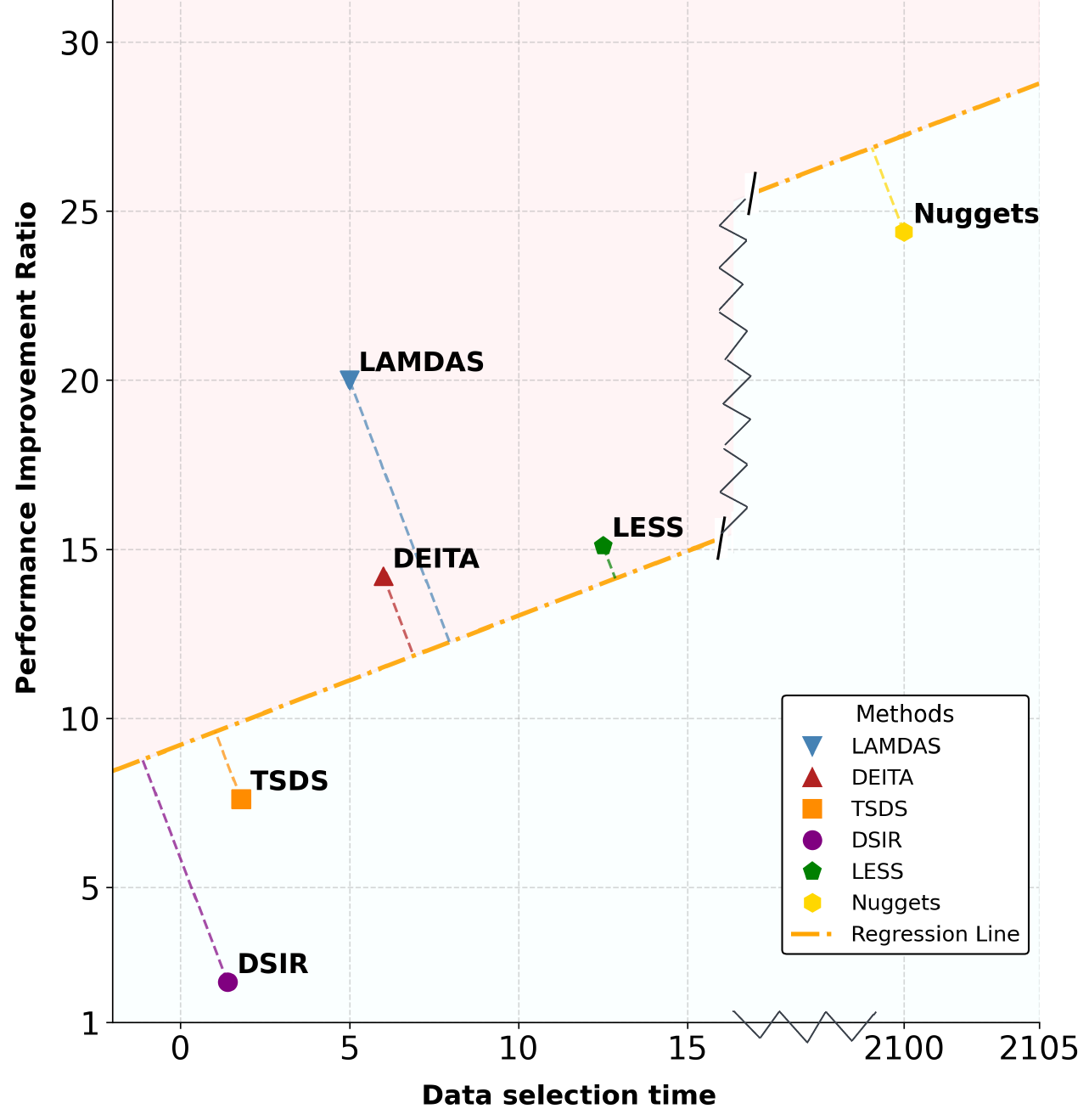}%
  \label{fig:sft}%
  %\hfill
}
\vspace{-2pt}
\caption{\label{tradeoff} Performance gains versus data selection efficiency (selection time) for all methods in both (a) CPT and (b) SFT scenarios. A larger perpendicular distance, indicated by the dashed lines, from a method to the regression line—and above the regression line—indicates a more favorable trade-off.  The regression line is estimated using linear regression on the x and y values of all methods—suggests a more favorable trade-off. The zigzag lines mean scale breaks.}
\end{figure*}

Unfortunately, existing approaches often struggle to achieve both of these goals simultaneously. As discussed in Section~\ref{sec:related_work}, the current literature on domain-specific data selection can be divided into two primary groups: similarity-based methods and direct optimization methods. Similarity-based methods typically extract features from both candidate and reference data, selecting candidate data based on their similarity to the reference data according to specific measures. Within this category, methods using easily obtained features like lexicon or embedding-based representations~\citep{liu2024tsds, xie2023data, thrush2025improving} and computationally cheap measures like cosine similarity or correlation are fast. However, selecting data based on such superficial characteristics can be ineffective or even detrimental to model performance, as pointed out in~\citep{engstrom2024dsdm}. To tackle this issue, more recent methods utilize more informative features, such as gradients~\citep{gu2025data,yu2024mates} or model weights~\citep{xia2024less, Huang2025EvaluatingSU}, and employ advanced measures like optimal transport~\citep{liu2024tsds} and KL reduction~\citep{xie2023data}, achieving improved performance gains at the cost of efficiency. On the other hand, direct optimization methods aim to enhance the LLM's performance on the reference data by selecting suitable subsets of candidate data based on frameworks like optimal control theory~\citep{gu2025data}, data models~\citep{engstrom2024dsdm}, conditional loss~\citep{brandfonbrener2024color}, or Shapley values~\citep{he2024shed}. However, the high computational burden of these methods typically requires them to estimate selection scores for only a representative subset of the candidate data. A smaller or simplified model is then trained to extrapolate these scores to the entire candidate dataset, adding a layer of approximation that can degrade performance. In essence, these techniques often remain computationally expensive.

In this paper, we introduce \textbf{LAMDAS}, innovatively using \textbf{L}LM \textbf{A}s an i\textbf{M}plicit classifer for domain-specific \textbf{DA}ta \textbf{S}election. The proposed method is applicable to both data selection for CPT and SFT. Our approach effectively addresses the trade-off between performance and efficiency. The key insight is to reframe data selection as a one-class classification (OCC) problem: given a small reference dataset representing the target domain (positive samples), how can we identify candidate data that "belongs" to the same class or domain? Unlike prior works, LAMDAS leverages the pre-trained LLM itself as an \textbf{implicit} classifier, bypassing the need for explicit feature engineering or computationally intensive retraining. We first demonstrate that a prefix tuned on the reference dataset acts as a concise representation of the target domain – a "domain prefix". LAMDAS scores candidate examples by comparing their likelihood under the LLM \textbf{with} and \textbf{without} this domain prefix. Candidates exhibiting significantly higher likelihoods when conditioned on the prefix are prioritized, indicating a stronger alignment with the reference distribution. Since prefix tuning only modifies the prefix tokens while leaving the LLM's weights untouched, LAMDAS effectively preserves the LLM's general knowledge and prevents overfitting to the limited reference data. Moreover, given the relative simplicity of binary classification, a smaller LLM can often achieve good classification performance (as shown in our experiments), thereby ensuring the efficiency of our approach. 

Extensive experimental results demonstrate that LAMDAS not only surpasses full data training while utilizing only a small fraction of the data but also outperforms \textbf{seven} SOTA baselines for CPT data selection and \textbf{nine} baselines for SFT data selection. Moreover, as depicted in Figure \ref{tradeoff}, LAMDAS achieves \textbf{the most favorable trade-off} between performance gains and efficiency among all baselines evaluated. In summary, the key contributions of this work are:
\begin{itemize}
    \item We innovatively formulate the domain-specific data selection problem as an OCC problem.
    \item We propose leveraging small LLMs as implicit classifiers to address the OCC problem, striking a compelling balance between efficacy and efficiency.
    \item We demonstrate through extensive experimentation on both CPT and SFT data selection tasks that LAMDAS outperforms other SOTA baselines.
\end{itemize}

\section{Related Works}\label{sec:related_work}

As mentioned in the introduction, domain-specific data selection can be broadly categorized into two groups: similarity-based methods and direct optimization methods.

\subsection{Similarity-based Methods}
Similarity-based methods typically involve three primary steps: extracting features from both candidate and reference datasets, measuring the similarity between these two sets, and selecting candidate samples that closely match the reference data. Early approaches often relied on simple features and measures, including lexicon-based features \citep{10.1016/j.knosys.2024.111740}, embedding representations \citep{liu2024tsds}, and human-crafted rules \citep{wettig2024qurating, Sachdeva2024HowTT}. The similarity measures used in these methods include binary grammar discriminators \citep{Sachdeva2024HowTT, Touvron2023LLaMAOA}, rules defined by LLMs \citep{Li2024RulebasedDS}, cosine similarity \citep{rubin-etal-2022-learning}, and perplexity \citep{marion2023less, muennighoff2023scaling}. Although these methods are efficient and impose low computational overhead, they tend to capture only superficial features of candidate data and fail to account for the intricate relationships between candidate and reference datasets. As pointed out in~\citep{engstrom2024dsdm}, reliance on simplistic features and measures can adversely affect model performance. To address these limitations, researchers have investigated ways to enhance the complexity of either features or similarity measures. For example, some methods incorporate gradients \citep{Evans2024DataVW, everaert2024gio, Zhao2025BeyondSA}, as the gradient of selected data is expected to align with that of the reference data, thereby ensuring that loss on the reference data decreases upon training with the selected samples. An alternative approach, Grad-Mimic \citep{Huang2025EvaluatingSU}, selects data that aligns its gradient with a direction that points toward the reference model in weight space. However, computing gradients can be computationally expensive, which poses scalability challenges for these methods. To alleviate this issue, SkMM \citep{Dong2024SketchyMM} employs gradient sketching, while LESS \citep{xia2024less} uses low-rank adaptation (LoRA) gradients as a low-dimensional approximation of the original gradient. MATES~\citep{yu2024mates} instead trains a small model to approximate the mapping between the candidate data and its influence score. Apart from using more advanced features, more effective measures have also been explored, such as optimal transport \citep{Kaplan2020ScalingLF, liu2024tsds}
and KL reduction \citep{xie2023data}. These techniques seek to align the distribution between the selected and the reference in the feature space, improving performance at the expense of efficiency.

\subsection{Direct Optimization Methods}
Direct optimization methods allow for direct interaction between the candidate and reference data, with the goal of maximizing the performance on the reference dataset when a model is trained using the selected data. SHED \citep{he2024shed} uses the Shapley value framework to attribute loss reduction to the candidate data, requiring training the model on different subsets of the candidate data and then computing the attribution score. Due to the extreme computational cost, SHED must first cluster the candidate data and then compute the attribution score for each cluster based on the score of the data point at the cluster center. Alternatively, PDS \citep{gu2025data} formulates data selection as an optimal control problem, using Pontryagin’s Maximum Principle (PMP) to derive necessary conditions for optimal data selection. However, solving the PMP equations is also computationally expensive, so PDS only solves the equation for representative samples within the candidate set and then approximates the solution using a small model. The resulting small model is then used for data selection, which may introduce approximation errors. Similarly, DsDm \citep{engstrom2024dsdm} frames data selection as an optimization problem using "data models" to approximate how the learning algorithm uses training data to make predictions on the target tasks. In addition, Nuggets \citep{Li2023OneSL} selects candidates that increase the generation probability of the reference data by comparing the likelihood of the reference data with and without the candidate as a one-shot example in the prompt, requiring a comparison of every candidate-reference data pair. In summary, these methods can improve performance by enabling direct interaction between candidate and reference data, but they often suffer from high computational cost.

In contrast to the above two categories, our proposed LAMDAS method aims to increase efficiency while still allowing for meaningful interaction between the candidate and reference data to ensure strong performance. We will elaborate on the relationship between LAMDAS and existing methods in Section~\ref{ssec:rel2others} after we formally introduce our methodology.

\section{Methodology}
Let $\gD_{ref} = \{\vx_i\}_{i=1}^N$ denote the reference dataset and $\gD_{cand} = \{\vy_j\}_{j=1}^M$ denote the candidate dataset. We frame the data selection problem as a classification task, where our objective is to train a classifier to determine whether each sample in the candidate dataset should be retained. However, a significant challenge arises in that we possess only positive samples from the reference dataset during training. This scenario defines the problem as a one-class classification (OCC) problem, where we must identify relevant samples without negative examples for guidance.

\subsection{LLM as an Implicit Classifier}
In the natural language processing domain, classification tasks typically utilize one of two frameworks: BERT-based encoders or GPT-based decoders. BERT-based methods typically fine-tune the model for classification, but they require negative samples, making them unsuitable for the OCC setting. Conversely, GPT-based methods, as exemplified by Ask-LLM~\citep{Sachdeva2024HowTT}, could potentially select data by prompting the text decoder with a natural language description of the class and asking whether the specific sample belongs to this class. However, in our scenario, we lack a clear natural language description of the target domain; we only have the representative examples in the reference data. Summarizing the reference data into a concise and accurate textual description can be difficult because text summarization inevitably loses some information through the bottleneck of human-interpretable natural language. Furthermore, some domains, such as emerging slang dialects, exhibit patterns that are difficult to capture in formal language but manifest clearly in data.
To circumvent the above limitations, we propose to harness the LLM itself as an implicit classifier.

\subsubsection{Prefix Tuning for Domain Representation Learning} We represent the target domain $\mC$ (defined by $\gD_{ref}$) via a learnable domain prefix $\mC$, which steers the LLM’s generation toward the reference distribution. Specifically, we learn a soft prefix $\mC$ that maximizes the likelihood of the reference data:
\begin{align} 
\mC &= \argmax \log p(\vx_{1:N}|\mC)\notag \\ 
&= \argmax \sum_{i=1}^N \log p(\vx_i|\mC). \label{eq:C_ml}
\end{align}
Here we opt for prefix tuning due to its greater flexibility compared to prompt tuning. By employing a learned soft prefix, we avoid the need to manually craft domain descriptions, a requirement that hinders the use of Ask-LLM. Furthermore, the above estimation inherently maximizes the mutual information between the reference data and the learned domain representation $\mC$. Given the typically small size of the reference dataset, the prefix can effectively retain essential information about the domain defined by that dataset.

\subsubsection{Likelihood Ratio for Data Selection}
Once we have learned the domain representation $\mC$, we can estimate the probability that a candidate sample belongs to the same class or domain, leveraging Bayes' theorem:
\begin{align} \label{eq:bayes}
p(\mC|\vy_j) = \frac{p(\vy_j|\mC) p(\mC)}{p(\vy_j)}.
\end{align}
Here, we can disregard the prior $p(\mC)$ because it remains constant across all samples $\vy_j$ in the candidate set, and does not assist in differentiating between selected and unselected data. Consequently, we define the sample selection score using the likelihood ratio of $\vy_j$ from the LLM with and without the prefix or class representation $\mC$:
\begin{align}
s(\vy_j) = \frac{p(\vy_j|\mC)}{p(\vy_j)}.
\end{align}
We select $\vy_j$ when the score exceeds the threshold $\tau = 1$, thus prioritizing samples that align more closely with the representation of the reference distribution. Indeed,
\begin{proposition} (Likelihood Ratio as Domain Discriminant): 
The score $s(\vy_j)$ is equivalent to the likelihood ratio test statistic for distinguishing $p(\vy_j|\mC)$ from $p(\vy_j)$. This likelihood ratio test is optimal for binary hypothesis testing with fixed Type I or Type II error rates.
\end{proposition}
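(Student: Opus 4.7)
The plan is to prove the two claims of the proposition separately, with the second reducing to a standard invocation of the Neyman--Pearson lemma. First, I would set up the binary hypothesis testing framework explicitly. Under the null $H_0$, $\vy_j$ is modeled as being drawn from the base LLM distribution $p(\vy)$; under the alternative $H_1$, it is drawn from the domain-conditional distribution $p(\vy \mid \mC)$ induced by the prefix learned in Eq.~\eqref{eq:C_ml}. With this framing, the classical likelihood ratio test statistic for a simple-vs-simple test is $\Lambda(\vy_j) = p(\vy_j \mid \mC)/p(\vy_j)$, which coincides verbatim with the selection score $s(\vy_j)$. The equivalence claim therefore reduces to identifying the two generative models in our pipeline with the two hypotheses, and is essentially definitional.

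For the optimality claim, I would invoke the Neyman--Pearson lemma: among all tests of Type I error at most $\alpha$, the test that rejects $H_0$ when $\Lambda(\vy_j) \geq k_\alpha$ is uniformly most powerful, i.e., maximizes $1-\beta$ where $\beta$ is the Type II error. Symmetrically, fixing Type II error and minimizing Type I error yields the same family of tests, which establishes optimality under either error-constraint convention stated in the proposition. Because our selection rule rejects $H_0$ precisely when $s(\vy_j) > \tau$ for a user-chosen threshold, it lies in this optimal family; the choice $\tau = 1$ corresponds to the Bayes-optimal rule under a uniform prior over hypotheses and zero-one loss, while varying $\tau$ simply traces out the Neyman--Pearson boundary of the achievable Type I / Type II error trade-off. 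I would state the lemma in its standard form and cite it rather than rederive it.

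The main obstacle is not a mathematical one but a modeling subtlety: one must be careful about what $p(\vy)$ actually denotes. In our pipeline it is the likelihood under the base LLM with no prefix attached, and we are tacitly identifying this with the marginal data distribution under $H_0$. The Neyman--Pearson guarantee is exact only when $p(\vy \mid \mC)$ and $p(\vy)$ are the true generative densities under the two hypotheses, so I would flag this as a modeling assumption---its quality depends on how well the pre-trained LLM approximates the background distribution and how faithfully the prefix tuning objective in Eq.~\eqref{eq:C_ml} recovers the in-domain distribution---rather than attempting to quantify the approximation gap. Once this identification is granted, both halves of the proposition follow in a few lines, and the proof itself can be kept to a short paragraph.
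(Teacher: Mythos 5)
Your proposal is correct and takes the same route as the paper: the paper's entire proof is a one-line citation to the Neyman--Pearson lemma, and you have simply fleshed out the hypothesis-testing setup (identifying $H_0$ with $p(\vy_j)$, $H_1$ with $p(\vy_j\mid\mC)$, and noting the modeling caveat) before invoking that same lemma. Nothing in your argument diverges from the paper's reasoning; you've just made explicit what the paper leaves implicit.
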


\begin{proof}
This proposition is supported by the Neyman-Pearson lemma~\citep{casella2024statistical}.
\end{proof}

The formulation presented leverages the LLM as an implicit classifier without modifying the LLM's weights. This strategy preserves the rich pre-trained knowledge embedded in the LLM, ensuring that our method is generalizable to unseen samples within the candidate set. Additionally, our approach allows for seamless application across different domains by simply altering the domain prefix. Given that classification tasks are relatively easy, even small LLMs can perform well in practice, further enhancing efficiency.

\subsection{Relation to Other Methods}
\label{ssec:rel2others}
\subsubsection{Balancing Efficacy and Efficiency}
As previously discussed, strategies for domain-specific data selection can be categorized into two main groups: similarity-based and direct optimization methods. In fact, similarity-based methods can be viewed as bi-encoder or dual-encoder approaches in information retrieval~\citep{muennighoff2022sgpt}, 
where features for both candidate and reference data are independently extracted via encoders and then compared using similarity measures. This constrained interaction limits their ability to fully capture relationships between the two data sets, often resulting in suboptimal performance. In contrast, direct optimization methods facilitate direct interaction between candidate and reference data, akin to cross-encoder approaches~\citep{liao-etal-2024-d2llm}, thus improving performance. For example, NUGGETS allows for interaction between every pair $(\vx_i, \vy_j)$ by computing the likelihood ratio of $p(\vy_j | \vx_i) / p(\vx_i)$. However, such methods often significantly increase computational costs.

Our proposed method seeks to strike a balance between these two approaches. Initially, we condense the reference data into a concise domain representation or prefix, which then allows for direct interaction between all candidate data and this domain representation. The maximum likelihood estimation of the domain representation (see Eq.~\eqref{eq:C_ml}) theoretically maximizes the mutual information between the domain prefix and the reference data. Given that reference datasets typically have limited sample sizes, it is straightforward for the prefix to retain critical information. The direct interaction between the prefix and each candidate sample ensures that performance exceeds that of similarity-based methods, as cross-encoder methods typically deliver superior outcomes compared to bi-encoder techniques. Moreover, the concise domain prefix eliminates the need for exhaustive interactions between every candidate and reference data pair, thus enhancing efficiency in comparison to direct optimization methods.

\subsubsection{Connection to Classifier-Free Guidance}
Our method draws inspiration from classifier-free guidance employed in diffusion models~\citep{Yang2022DiffusionMA}, which uses a generative diffusion model as an implicit classifier to steer the text-to-image generation process, ensuring that generated images can be classified into the domains defined by provided text. In our case, we utilize the LLM as an implicit classifier for data selection, ensuring that the selected subset from the candidate dataset accurately reflects the domain defined by the reference data.
\section{Experiments}

We conduct comprehensive experiments to validate LAMDAS across continued pre-training (CPT) and supervised fine-tuning (SFT) scenarios. Our experiments aim to answer the following questions: 
\begin{enumerate}
    \item \textbf{Performance Gains}: Does LAMDAS outperform existing methods in downstream tasks when selecting the same amount of data?
    \item \textbf{Efficiency}: Does LAMDAS achieve lower time complexity for data selection compared with existing methods?
    \item \textbf{Sensitivity}: Is LAMDAS's performance sensitive to the selection threshold $\tau$, the length of the prefix, the size of the classifier, and the base LLM for the classifier?
\end{enumerate}

\begin{table*}[t]
\centering

\footnotesize
\resizebox{1.0\textwidth}{!}{
\begin{tabular}{ccccccccccc}
\toprule
Methods  & Data Size  & Models & HE & HE+  & MBPP & MBPP+  & LCB &CRUXEval & AVG\\
\midrule
& & Qwen2.5-0.5B & { {\textbf{23.5}}} & { {\textbf{19.4}}}  & {{\textbf{43.9}}} & { {\textbf{37.8}}}&{ {\textbf{5.8}}}  & { {\textbf{4.9}}} & {{\textbf{23.5}}}\textit{\scriptsize{+52.6\%}}\\

& & Qwen2.5-1.5B & { {\textbf{26.7}}} &{ {\textbf{22.3}}} & { {\textbf{50.5}}} & { {\textbf{ 44.3}}}& { {\textbf{13.7}}} & { {\textbf{9.7}}} & { {\textbf{27.9}}}\textit{\scriptsize{+46.1\%}}\\
\multirow{-3}{*}{LAMDAS(ours)}& \multirow{-3}{*}{15B}  & Qwen2.5-7B   & { {\textbf{ 35.2}}}& { {\textbf{29.2}}}& { {\textbf{58.5}}}&{ {\textbf{50.6}}} &{ {\textbf{25.7}}} & { {\textbf{15.8}}}  &{ {\textbf{35.8}}}\textit{\scriptsize{+24.7\%}}\\

\midrule
& & Qwen2.5-0.5B & 22.1 & 17.6 & 42.5 & \uwave{35.9} & 4.1 & 3.9 & 21.5\textit{\scriptsize{+39.6\%}}\\
& & Qwen2.5-1.5B & 25.8 & 21.2 & 48.7 & 43.1 &9.9 & 8.3 & 26.4\textit{\scriptsize{+38.2\%}}\\
\multirow{-3}{*}{DEITA \citep{liu2024what}} & \multirow{-3}{*}{15B}  & Qwen2.5-7B & 34.1 & 27.8 & \uwave{56.1} & \underline{49.2} & 21.3 & 14.9 & 33.6\textit{\scriptsize{+17.1\%}}\\

\midrule
& & Qwen2.5-0.5B & \uwave{23.2} & \uwave{18.6} & 39.3 & 35.4 & \uwave{4.9} & \uwave{4.2} &\uwave{20.9}\textit{\scriptsize{+35.7\%}}\\
& & Qwen2.5-1.5B & \uwave{25.8} & \uwave{21.5} & \uwave{47.6} & \uwave{43.8} & \uwave{12.5} & \uwave{8.7} &\uwave{26.7}\textit{\scriptsize{+39.8\%}}\\
\multirow{-3}{*}{DsDm\citep{engstrom2024dsdm}} & \multirow{-3}{*}{15B}  & Qwen2.5-7B & \uwave{34.7} & \uwave{28.2} & \underline{55.8} & \uwave{49.4} & \uwave{23.5} & \uwave{15.2} & \uwave{34.5}\textit{\scriptsize{+20.2\%}}\\

\midrule
& & Qwen2.5-0.5B & \underline{22.9} & \underline{19.1} & \uwave{41.3} & \uwave{36.4} & \underline{5.3}& \underline{4.6} & \underline{21.6}\textit{\scriptsize{+40.3\%}}\\
& & Qwen2.5-1.5B & \uwave{26.2} & \underline{21.9} & \underline{49.8} & \underline{44.1} & \underline{13.4} & \underline{8.8} & \underline{27.4}\textit{\scriptsize{+43.5\%}}\\
\multirow{-3}{*}{TSDS \citep{liu2024tsds}}  & \multirow{-3}{*}{15B}  & Qwen2.5-7B & \underline{34.2} & \underline{27.2} & 54.7 & 48.6 & \underline{24.2}& \underline{15.3} & \underline{34.0}\textit{\scriptsize{+18.5\%}}\\

\midrule
 & & Qwen2.5-0.5B & 21.5 & 17.3 & \underline{41.2} & 34.8 & 3.89 & 3.5 & 20.4\textit{\scriptsize{+32.5\%}}\\
& & Qwen2.5-1.5B & 24.2 & 20.1 & 46.3 & 41.8 & 9.1 &7.1 & 24.8\textit{\scriptsize{+29.8\%}}\\
\multirow{-3}{*}{DSIR \citep{xie2023data}} & \multirow{-3}{*}{15B}  & Qwen2.5-7B & 33.5 & 26.3 & 53.1 & 47.8 & 20.9 & 12.5 & 32.7\textit{\scriptsize{+13.9\%}}\\

\midrule
 & & Qwen2.5-0.5B & 20.7 & 15.4 & 39.2 & 33.7 & 3.6 & 3.4 &19.3\textit{\scriptsize{+25.3\%}}\\
& & Qwen2.5-1.5B & 22.1 & 19.4 & 45.7 & 41.4 & 10.2 & 6.6 & 24.2\textit{\scriptsize{+26.7\%}}\\
\multirow{-3}{*}{PPL} & \multirow{-3}{*}{15B}  & Qwen2.5-7B & 30.6 & 27.5 & 51.7 & 46.3 & 19.2 & 11.3 & 31.1\textit{\scriptsize{+8.4\%}}\\

\midrule
& & Qwen2.5-0.5B & 19.7 & 14.8 & 38.6 & 32.4 & 3.41 & 3.3 & 18.7\textit{\scriptsize{+21.4\%}}\\
& & Qwen2.5-1.5B & 21.7 & 17.9 & 43.7 & 38.9 & 8.6 & 6.5 & 22.9\textit{\scriptsize{+19.9\%}}\\
\multirow{-3}{*}{Random} & \multirow{-3}{*}{15B}  & Qwen2.5-7B & 29.5 & 26.1 & 49.2 & 44.7 & 18.3 & 9.7 & 29.6\textit{\scriptsize{+3.1\%}}\\

\midrule
& & Qwen2.5-0.5B & 12.8 & 9.3 & 35.1 & 28.6 & 3.3 & 3.0 & 15.4\\
& & Qwen2.5-1.5B & 17.8 & 13.2 & 38.1 & 30.3 & 7.9 & 7.2 & 19.1\\
\multirow{-3}{*}{Full} & \multirow{-3}{*}{100B} & Qwen2.5-7B & 27.2 & 22.1 & 52.9 & 43.4 & 16.4 & 9.9 & 28.7 \\    

\midrule
& & Qwen2.5-0.5B & 16.5 & 12.3 & 37.1 & 30.6 & 5.2 & 6.0 & 18.0\\
& & Qwen2.5-1.5B & 21.2 & 17.3 & 41.8 & 37.9 & 8.1 & 7.7 & 21.1\\
\multirow{-3}{*}{Base} & \multirow{-3}{*}{0B} & Qwen2.5-7B & 28.9 & 22.1 & 45.9 & 40.4 & 14.4 & 8.9 & 26.8 \\   
\bottomrule
\end{tabular}}
\caption{\label{CPT_main_results}Pass@1 performance on HumanEval (HE), MBPP, LCB, and CRUXEval. All models are continually pretrained on selected data and evaluated with a zero-shot prompt strategy. We also compute the average of models' performance on all benchmarks and calculate the proportion of improvement compared with the "Full" data. The best results are highlighted in {\textbf{bold}}, the second-best results are \uwave{underlined}, and the third are \underline{wavy underlined}.}
\end{table*}

To answer our research questions, we design a comprehensive experimental setup for both coding and mathematical reasoning across CPT and SFT scenarios. We use high-quality reference data to select from large candidate pools and evaluate performance on a suite of relevant benchmarks chosen to test generalization. LAMDAS is benchmarked against several groups of baselines: seven methods applicable to both CPT and SFT, including similarity-based methods (DSIR, TSDS), direct optimization methods (DsDm), and perplexity-based (PPL) approaches; two additional direct optimization methods for the smaller SFT datasets (Nuggets, LESS) that are too computationally intensive for CPT; and standard controls (Random, Full). For brevity, detailed specifications of the datasets, baseline implementations, and hyperparameters are provided in Appendix \ref{sec:experiment_setup}.

\begin{table*}[htb]
\centering
\footnotesize
\resizebox{1.0\textwidth}{!}{
\begin{tabular}{cccccccccc}
\toprule
Methods & Data Size & Models & HE & HE+ & MBPP & MBPP+ & LCB & CRUXEval & AVG \\
\midrule
\multirow{4}{*}{LAMDAS(ours)} & \multirow{4}{*}{750K} & Qwen2.5-0.5B-Instruct & {\textbf{25.3}} & {\textbf{23.1}} & {\textbf{48.2}} & {\textbf{42.1}} & {\textbf{8.7}} & {\textbf{15.1}} & {\textbf{27.1}}\textit{\scriptsize{+36.9\%}} \\
& & Qwen2.5-1.5B-Instruct & { {\textbf{32.8}}}& { {\textbf{30.1}}}& { {\textbf{55.6}}}&{ {\textbf{47.1}}}& { {\textbf{17.2}}}& { {\textbf{25.5}}}& { {\textbf{34.3}}}\textit{\scriptsize{+30.3\%}}\\
 & & Qwen2.5-7B-Instruct & {\textbf{42.8}} & {\textbf{37.6}} & {\textbf{62.1}} & {\textbf{55.3}} & {\textbf{32.5}} & {\textbf{37.2}} & {\textbf{44.6}}\textit{\scriptsize{+19.6\%}} \\
 & & Qwen2.5-32B-Instruct & {\textbf{89.0}} & {\textbf{84.1}} & {\textbf{75.3}} & {\textbf{69.4}} & {\textbf{44.1}} & {\textbf{52.3}} & {\textbf{69.0}}\textit{\scriptsize{+20.0\%}} \\
\midrule
\multirow{4}{*}{DSIR\citep{xie2023data}} & \multirow{4}{*}{750K} & Qwen2.5-0.5B-Instruct & 21.2 & 18.5 & 36.7 & 31.5 & 5.5 & 11.2 & 20.8\textit{\scriptsize{+5.1\%}} \\
& & Qwen2.5-1.5B-Instruct & 27.8 & 23.1 & 43.2 & 35.4 & 13.6 & 17.6 & 26.8\textit{\scriptsize{+1.5\%}}\\
 & & Qwen2.5-7B-Instruct & 38.9 & 32.1 & 52.3 & 47.6 & 24.4 & 30.3 & 37.6\textit{\scriptsize{+0.8\%}} \\
 & & Qwen2.5-32B-Instruct & 80.3 & 72.7 & 61.9 & 56.3 & 40.5 & 47.3 & 59.8\textit{\scriptsize{+2.2\%}} \\
\midrule
\multirow{4}{*}{TSDS \citep{liu2024tsds}} & \multirow{4}{*}{750K} & Qwen2.5-0.5B-Instruct & 23.4 & 21.5 & 40.7 & 34.5 & 6.8 & 11.9 & 23.1\textit{\scriptsize{+16.7\%}} \\
& & Qwen2.5-1.5B-Instruct & \underline{30.8} & 25.2 & 49.3 & \underline{43.2} & 14.8 & 18.3 & 30.3\textit{\scriptsize{+14.8\%}}\\
 & & Qwen2.5-7B-Instruct & \underline{40.1} & 34.2 & 55.6 & 50.2 & 27.3 & 32.1 & 39.9\textit{\scriptsize{+7.0\%}} \\
 & & Qwen2.5-32B-Instruct & 82.5 & 75.9 & 63.4 & 59.2 & 41.2 & 48.9 & 61.9\textit{\scriptsize{+7.6\%}} \\
\midrule
\multirow{4}{*}{DEITA \citep{liu2024what}} & \multirow{4}{*}{750K} & Qwen2.5-0.5B-Instruct & 23.7 & 21.8 & 46.1 & 40.5 & 7.3 & 13.5 & 25.5\textit{\scriptsize{+28.8\%}} \\
& & Qwen2.5-1.5B-Instruct & 29.3 & 25.2 & 51.4 & 42.9 & \underline{15.7} & 21.1 & 30.9\textit{\scriptsize{+17.0\%}}\\
 & & Qwen2.5-7B-Instruct & 39.8 & 33.2 & 59.7 & 51.8 & 29.3 & 34.7 & 41.4\textit{\scriptsize{+11.0\%}} \\
 & & Qwen2.5-32B-Instruct & 86.1 & 80.2 & 72.9 & 65.9 & 41.7 & \underline{48.1} & 65.7\textit{\scriptsize{+14.2\%}} \\
\midrule
\multirow{4}{*}{DsDm\citep{engstrom2024dsdm}} & \multirow{4}{*}{750K} & Qwen2.5-0.5B-Instruct & \underline{24.2} & 21.7 & \underline{47.5} & \underline{41.8} & 7.6 & \underline{14.3} & \underline{26.2}\textit{\scriptsize{+32.3\%}} \\

& & Qwen2.5-1.5B-Instruct & \uwave{31.4} & \uwave{29.5} & 51.2 & 42.3 & 15.5 & 24.1 & \uwave{32.3}\textit{\scriptsize{+22.3\%}}\\
 & & Qwen2.5-7B-Instruct & 39.8 & \underline{34.7} & \underline{60.3} & 50.8 & 29.6 & \underline{36.2} & \underline{41.9}\textit{\scriptsize{+12.3\%}} \\
 & & Qwen2.5-32B-Instruct & \uwave{87.2} & \uwave{81.1} & \uwave{73.5} & \uwave{67.1} & \uwave{42.9} & 47.6 & \uwave{66.6}\textit{\scriptsize{+15.8\%}} \\
\midrule
\multirow{4}{*}{Nuggets\citep{Li2023OneSL}} & \multirow{4}{*}{750K} & Qwen2.5-0.5B-Instruct & 24.1 & \uwave{22.7} & 43.4 & 39.4 & \underline{7.7} & \underline{14.8} & 25.4\textit{\scriptsize{+28.3\%}} \\
& & Qwen2.5-1.5B-Instruct & 29.8 & 24.9 & \underline{51.7} & 42.9 & 15.4 & \uwave{23.9} & 31.4\textit{\scriptsize{+18.9\%}}\\
 & & Qwen2.5-7B-Instruct & 38.7 & 33.6 & 57.5 & \underline{52.1} & \underline{29.8} & \uwave{37.0} & 41.5\textit{\scriptsize{+11.3\%}} \\
 & & Qwen2.5-32B-Instruct & \underline{86.4} & 80.0 & 72.9 & 65.6 & 42.1 & 47.5 & 65.8\textit{\scriptsize{+14.4\%}} \\
\midrule
\multirow{4}{*}{LESS \citep{xia2024less}} & \multirow{4}{*}{750K} & Qwen2.5-0.5B-Instruct & \uwave{24.5} & \underline{22.3} & \uwave{48.2} & \uwave{43.1} & \uwave{8.4} & \uwave{13.8} & \uwave{26.7}\textit{\scriptsize{+34.8\%}} \\
& & Qwen2.5-1.5B-Instruct & 30.2 & \underline{26.1} & \uwave{53.2} & \uwave{44.8} & \uwave{15.8} &\underline{22.8} & \underline{32.2}\textit{\scriptsize{+3.1\%}}\\
 & & Qwen2.5-7B-Instruct & \uwave{40.2} & \uwave{35.1} & \uwave{61.1} & \uwave{54.2} & \uwave{30.9} & 35.7 & \uwave{42.9}\textit{\scriptsize{+15.0\%}} \\
 & & Qwen2.5-32B-Instruct & \underline{86.7} & \underline{80.5} & \underline{73.1} & \underline{66.2} & \underline{42.3} & \uwave{48.4} & \underline{66.2}\textit{\scriptsize{+15.1\%}} \\
\midrule
\multirow{4}{*}{Random} & \multirow{4}{*}{750K} & Qwen2.5-0.5B-Instruct & 18.6 & 15.8 & 32.5 & 28.7 & 5.4 & 8.1 & 18.2\textit{\scriptsize{-0.8\%}} \\
& & Qwen2.5-1.5B-Instruct & 24.5 & 21.2 & 37.8 & 33.1& 14.0 & 15.3 & 24.3\textit{\scriptsize{-0.8\%}}\\
 & & Qwen2.5-7B-Instruct & 35.6 & 30.3 & 49.2 & 43.2 & 23.5 & 27.9 & 35.0\textit{\scriptsize{-0.6\%}} \\
 & & Qwen2.5-32B-Instruct & 76.2 & 68.9 & 56.7 & 51.2 & 35.1 & 39.8 & 54.7\textit{\scriptsize{-4.9\%}} \\
\midrule
\multirow{4}{*}{Full} & \multirow{4}{*}{1800K} & Qwen2.5-0.5B-Instruct & 20.1 & 16.5 & 34.1 & 31.6 & 4.4 & 12.3 & 19.8 \\
& & Qwen2.5-1.5B-Instruct & 27.4 & 22.5 & 40.2 & 35.2& 12.7 & 20.6 & 26.4\\
 & & Qwen2.5-7B-Instruct & 37.6 & 31.8 & 51.2 & 45.6 & 25.2 & 32.2 & 37.3 \\
 & & Qwen2.5-32B-Instruct & 77.5 & 70.4 & 60.1 & 54.3 & 38.5 & 44.2 & 57.5 \\
\midrule
\multirow{4}{*}{Base} & \multirow{4}{*}{0K} & Qwen2.5-0.5B-Instruct & 15.2 & 10.5 & 26.7 & 21.9 & 3.6 & 6.7 & 14.1 \\
& & Qwen2.5-1.5B-Instruct & 22.3 & 17.6 & 35.1 & 31.4 & 8.9 & 15.4 & 21.8 \\
 & & Qwen2.5-7B-Instruct & 31.1 & 25.8 & 42.1 & 35.7 & 19.5 & 24.8 & 29.8 \\
 & & Qwen2.5-32B-Instruct & 72.1 & 65.5 & 53.4 & 47.9 & 30.4 & 37.1 & 51.1 \\
\bottomrule
\end{tabular}}
\caption{\label{SFT_code_results}Performance of Qwen2.5 serious models trained on various data selection methods on HumanEval (HE), MBPP, LiveCodeBench, and CRUXEval (Zero-shot, Pass@1). We also compute the average of models' performance on all benchmarks and calculate the proportion of improvement compared with the "Full" data. The best results are highlighted in {\textbf{bold}}, the second-best results are \uwave{ underlined}, and the third are \underline{highlighted}.}
\vspace{-0.45cm}
\end{table*}

The next two subsections present the evaluation results comparing LAMDAS with several baseline methods. Our experiments focus on assessing the capabilities of the Qwen2.5 series models when trained using datasets selected by LAMDAS and the baseline methods. We examine performance on coding tasks for CPT, and on both coding and mathematical reasoning tasks for SFT.

\subsection{Data Selection for CPT}
The results demonstrate that CPT with data selected by LAMDAS consistently achieves the best performance across all metrics, regardless of the underlying model. Indeed, when using the selected data to train the Qwen2.5-7B model, LAMDAS outperforms the same model trained on the "Full" data by an average of 24.7\%, while simultaneously reducing the dataset size by a significant 75\%. Similarly, Qwen2.5-0.5B and Qwen-1.5B resulting from LAMDAS also outperform the same models trained on the "Full" data by an average of 52.6\% and 46.1\%. As visualized in Figure~\ref{distribution}, LAMDAS effectively concentrates on the region that closely matches the reference distribution, which explains its superior performance in downstream tasks.

The second-best method, DsDm, focuses on direct optimization to minimize loss on the reference set by selecting candidate data based on influence scores. However, its high computational cost necessitates the use of a simplified proxy estimator to approximate these scores, trading approximation error for efficiency~\citep{10.5555/3692070.3692568}. 

The third-best method, TSDS, employs text embeddings as features and uses optimal transport as a measure of similarity. TSDS's better performance compared to DSIR illustrates the importance of careful feature selection in similarity-based approaches, as DSIR relies solely on superficial features like n-grams. Despite this, both similarity-based methods still fall short of LAMDAS's performance. This can be attributed to the fact that feature extraction and similarity computation, even with advanced features and measures, do not offer the same flexibility as LAMDAS. LAMDAS facilitates direct interaction between candidate and reference domains through a cross-attention mechanism in the LLM, creating connections between the domain prefix and candidate samples. 

In contrast, DEITA selects data based on quality and complexity scores, prioritizing factors other than domain relevance. This explains its underperformance on domain-specific tasks. While PPL may implicitly consider domain relevance, it primarily selects data where next tokens are easily predictable given previous tokens, driven by its objective of minimizing next-token prediction loss (see Table~\ref{case_study} in the appendix).
Interestingly, the "Random" baseline outperforms the "Full" baseline in our CPT experiments. We attribute this to the nature of the CPT data, which consists of pure code data from GitHub and The Stack v1. Continuously pre-training with this entire dataset can lead to catastrophic forgetting of the model's NLP capabilities (as shown in Table \ref{CPT_main_results}). The evaluation benchmarks, however, require instruction following for code completion and text-to-code tasks. The "Random" baseline mitigated this issue by only pre-training the model with 15B tokens, thus partially preserving the model's NLP abilities and leading to better results. Furthermore, other data selection methods further improved the results of "Random", showcasing their effectiveness.

\subsection{Data Selection for SFT}

\subsubsection{Code}  Once again, as shown in tables \ref{SFT_code_results}, the Qwen2.5-32B-Instruct model trained on data selected by LAMDAS demonstrates superior performance, exceeding the "Full" method by 20.0\% and outpacing the second-best method, LESS, by an average of 15.1\%. LESS and DsDm perform comparably, achieving the second and third best results, as both methods aim to select training data that minimizes the model's loss on the reference data. However, both face challenges related to high time complexity due to the expensive computation of gradients, as discussed in Section~\ref{sec:efficiency}. Additionally, LESS suffers from the limitations inherent in similarity-based methods, while DsDm improves efficiency by using small proxy models for gradient computation, though this can introduce approximation error. Consequently, both methods perform worse than LAMDAS. Similarly, although Nuggets is a direct optimization method, it still underperforms relative to LAMDAS. This shortfall may stem from Nuggets' reliance on the original Llama2-7B model to assess the relevance of candidate samples to the reference data, which may not fully capture the nuances of the specific domain. Conversely, LAMDAS utilizes prefix tuning for domain adaptation before evaluating candidate samples for relevance, enhancing its effectiveness. Different from the CPT experiment, "Full" performs better than "Random" here. This may be because all candidate data are related to coding tasks with natural language instructions, eliminating the risk of catastrophic forgetting of instruction-following capabilities when training with purely code data.

\subsubsection{Math} The performance trends for the math reasoning task are similar to those in coding tasks; detailed discussion is provided in Appendix \ref{app:math}.

\subsection{Efficiency}
\label{sec:efficiency}

\begin{table*}[t]
\small
\centering
\begin{tabular}{lllll}
\toprule
Models          & LAMDAS                 & DsDm          & TSDS            & DSIR       \\ \midrule
Time Complexity & $2*C_{forward}$ & $2*C_{forward} + C_{grad}$ & $\epsilon$      & $\epsilon$ \\
Run Time s/100      & 5s             &  7s   & 1.8s & 1.4s    \\ 
Reference Model Size &  0.5B & 0.5B & 161M & - \\
\midrule
Models          & PPL             & DEITA           & Nuggets         & LESS                      \\
\midrule
Time Complexity & $C_{forward}$   & $2*C_{forward}$ & $|D_{ref}|*C_{forward}$ & $4*(C_{forward}+C_{grad})$\\
Run Time s/100  & 15s & 6s & 2,100s  & 12.5s   \\
Reference Model Size & 7B & 7B & 7B & 0.5B\\
\bottomrule
\end{tabular}
\caption{\label{time_complexity} Time Complexity and Run Time of all methods. The run time is calculated based on the selection time for the same 100 samples. The time complexity is approximated as the number of model forward ($C_{forward}$). $\epsilon$ means the constant time complexity. The $C_{grad}$ means the time cost
of computing gradient. We also illustrate the reference model size. Since DSIR is N-gram based method, which model size is shown as ``-'' }
\end{table*}
As shown in Table~\ref{time_complexity}, LAMDAS is only slower than the similarity-based methods TSDS and DSIR that are based on simple features such as text embeddings or n-grams, but is faster than all the remaining baselines. Indeed, with a time complexity of 2× model forward passes ($C_{forward}$) and an execution speed of 5 seconds per 100 samples, LAMDAS with Qwen2.5-Coder-0.5B model as the reference model achieves a 1.4× speedup over direct optimization method like DsDm (0.5B model, 7s/100) and 2.5× over the similarity-based method LESS (0.5B model, 12.5s/100). The table also illustrates that the actual computational time aligns with the theoretical time complexity and model size for data selection.

Furthermore, to directly compare methods based on their performance gains and efficiency, we draw inspiration from Liu~\citep{liu2024take}, positing that methods with higher computational complexity should yield better data selection and, consequently, greater performance gains. To illustrate this trend, we apply linear regression to derive a regression line that represents the relationship between performance gains and runtime across all methods, as shown in Figure~\ref{tradeoff}. We then compute the perpendicular distances (represented by the dashed lines in Figure~\ref{tradeoff}) from each method to the regression line. A larger perpendicular distance above the regression line indicates a better trade-off between performance gains and efficiency.
\begin{figure}
    \centering  
    \adjustbox{center}{\includegraphics[width=0.5\linewidth, height=0.4\textwidth]{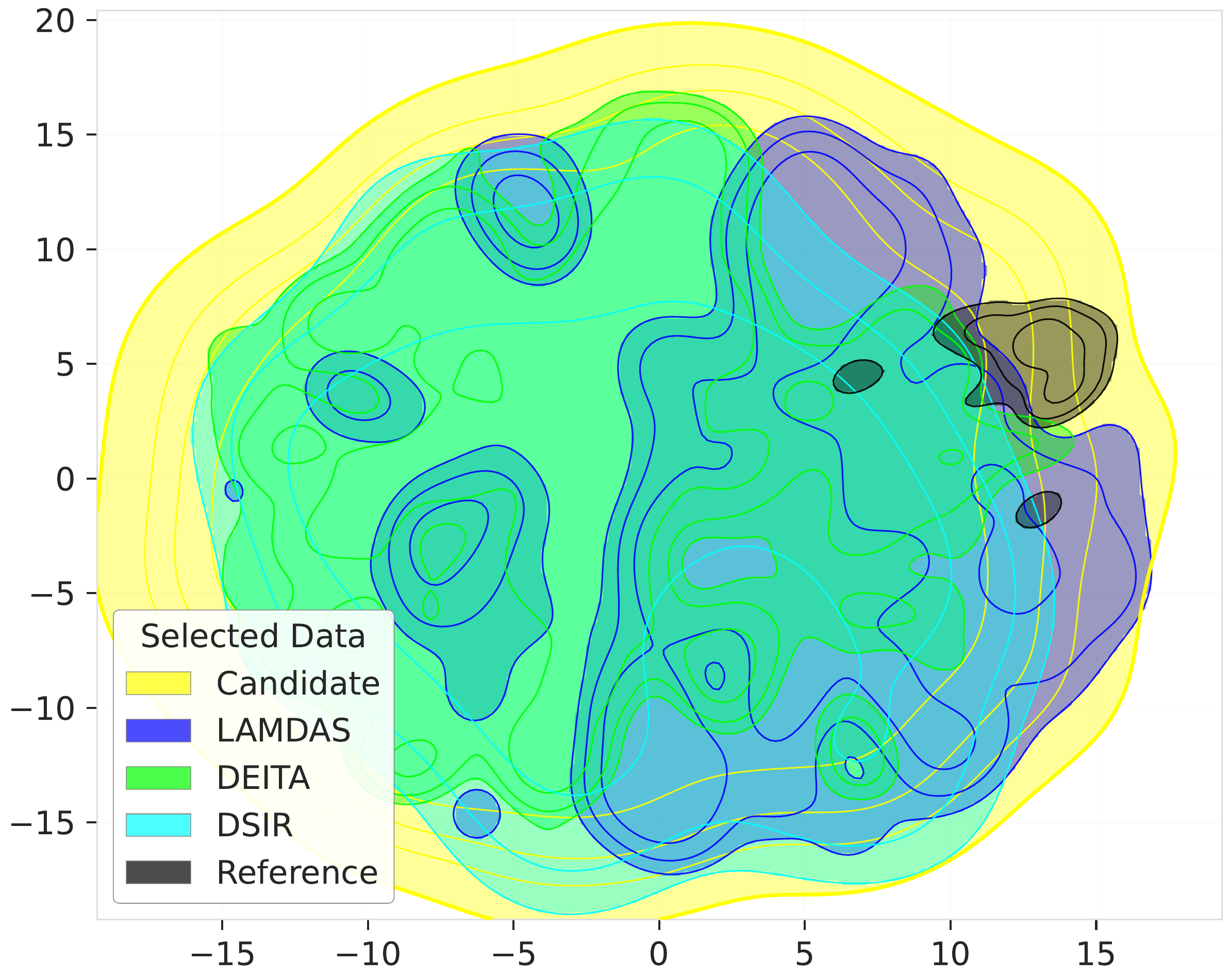}}
    \caption{Distribution comparison of candidate data (yellow), LAMDAS-selected data (blue), DEITA-selected data (green), DSIR-selected data (cyan), and reference data (black). Density contours represent the 10-50\% highest density regions for each distribution. Embeddings are normalized, and reduced to 2D space via t-SNE.}
    \label{distribution}
\end{figure}

Figure \ref{tradeoff} clearly illustrates that LAMDAS achieves a more favorable trade-off between performance gains and efficiency in both the CPT and SFT scenarios. While the two fastest methods, TSDS and DSIR, exhibit lower computational overhead, the data they select does not improve model performance to the same extent as the data selected by LAMDAS. Although the remaining methods may achieve better performance than TSDS and DSIR, they do so at the expense of efficiency. For example, LAMDAS can filter 100 billion tokens in 35 hours using only eight A100 GPUs, whereas LESS requires more than 144 hours for the same task. Moreover, the data selected by LESS fails to improve model performance as effectively as the data selected by LAMDAS.  Thus, LAMDAS surpasses these methods in terms of both performance gains and efficiency.

\subsection{Ablation Studies}

We also conduct a series of ablation studies to validate the impact of selection threshold $\tau$, prefix length, data selection model size, and data selection model type on the performance of LAMDAS. Due to the page limit, we summarize the major findings here and provide a detailed analysis in Appendix \ref{app:ablation}. The experimental results reveal four key insights: 1) the optimal value of $\tau = 1$ aligns with our design principle: to select candidate samples whose likelihood increases when conditioned on the domain prefix, thus ensuring relevance; 2) The prefix length of 30 represents an optimal "elbow point", balancing representation and efficiency; 3) the binary classification for data selection is a relatively simple task that can be effectively handled by small LLMs; 4) The domain specific models typically result in a more effective classifier.

\section{Conclusion}
In this paper, we propose LAMDAS, a novel domain-specific data selection method that utilizes a pre-trained LLM itself as an implicit classifier. By reframing data selection as a one-class classification (OCC) problem, LAMDAS circumvents the need for explicit feature engineering, common in similarity-based methods, and the computationally intensive retraining required by direct optimization methods. Experimental results demonstrate that LAMDAS achieves a superior balance between effectiveness and efficiency compared to nine SOTA baselines. Notably, models continually pre-trained on data selected by LAMDAS, using a 15\% selection ratio, outperform those trained on the full dataset by an average of at least 20\%. It is important to note that an ideal dataset typically exhibits high quality, broad coverage of the target domain, and high diversity, and may also introduce novel knowledge to the model, rather than simply containing samples already encountered during initial training. While various data selection methods have been proposed to address each of these aspects individually, LAMDAS can be readily integrated with these methods in a pipeline architecture to further enhance the domain-specific nature of the selected dataset. In future work, we aim to develop such pipelines for comprehensive data selection.

\bibliographystyle{colm2024_conference}
\bibliography{custom}

\appendix

\section{Experiment Setup}\label{sec:experiment_setup}

\begin{table*}[ht]
\small
\centering
\setlength{\tabcolsep}{1mm}
\begin{tabular}{llllllllll}
\toprule
Languages & Python  & Java    & JS      & C       & C++     & C\#     & PHP     & Others & Total \\
\midrule
Samples   & 24.45M & 53.03M & 30.67M & 14.19M & 18.47M & 22.72M & 24.44 M & 64.53M & 252.5M\\
\bottomrule
\end{tabular}
\caption{\label{cpt_statistics} The data statistics of CPT candidate data. We only list the languages that contain more than 10 million samples. The total dataset comprises 252.5 million samples and approximately 400 billion tokens.}
\end{table*}

\begin{table}[ht]
\centering
\small
\setlength{\tabcolsep}{0.8mm}
\begin{tabular}{llll}
\toprule
Domain                & Dataset          & Task                      & Samples \\
\midrule
\multirow{8}{*}{Code} & CODE-COMMENT     & code comment generation   & 645711  \\
                      & CODE-COMPLETION  & code completion           & 192,547 \\
                      & TEXT2CODE        & text to code generation   & 94086   \\
                      & CODE-TRANS       & code translation          & 307585  \\
                      & UNIT-TEST        & unit test-case generation & 390393  \\
                      & CODE-FEEDBACK    & code generation           & 66383   \\
                      & EVOL-INST        & code generation           & 66862   \\
                      & CODE-EXERCISE    & code generation           & 109827  \\
\midrule
MATH                  & OpenR1-MATH-220K & math reasoning            & 225129 \\
\bottomrule
\end{tabular}
\caption{\label{sft_statistics} Data statistics for candidate SFT data related to code and math tasks, including the specific tasks and data sizes.}
\end{table}

\subsection{Reference Data} The reference data is collected from diverse high-quality sources within each domain. We aim to present distinct, high-quality datasets that are representative of the types of problems tackled in each domain.
\begin{itemize}
\item \textbf{Code:}
LiveCodeBench2023 \citep{jain2025livecodebench} provides a diverse and contamination-free benchmark for evaluating LLMs' code-related capabilities. This benchmark features 400 high-quality programming problems sourced from LeetCode, AtCoder, and CodeForces competitions between May 2023 and March 2024, assessing multifaceted skills including code generation, self-repair, code execution, and test output prediction.
\item \textbf{Math:}
The math reference data consists of two key datasets focused on high-quality mathematical reasoning: LIMO \citep{Ye2025LIMOLI} and O1-journey \citep{Qin2024O1RJ}. LIMO is a carefully curated collection of 817 high-quality mathematical reasoning problems, each with detailed step-by-step solutions. Designed to challenge conventional scaling laws, LIMO demonstrates that models can achieve state-of-the-art performance on benchmarks like AIME24 and MATH-500 through quality-focused training, rather than relying solely on large-scale data accumulation. Complementing LIMO, O1-journey is a specialized training corpus developed by GAIR-NLP to implement the journey learning paradigm, featuring 327 structured reasoning traces with iterative problem-solving steps (including hypothesis testing, reflection, and backtracking). 
\end{itemize}

%Together, these datasets provide diverse and challenging mathematical problems that emphasize reasoning and problem-solving skills.

%LIMO~\citep{Ye2025LIMOLI}, a carefully curated collection of 817 high-quality mathematical reasoning problems with detailed step-by-step solutions, is designed to challenge conventional scaling laws by demonstrating that models can achieve state-of-the-art performance on benchmarks like AIME24 and MATH-500 through quality-focused training rather than large-scale data accumulation.
%\item \textbf{Math:}O1-journey\citep{Qin2024O1RJ}, a specialized training corpus developed by GAIR-NLP to implement the journey learning paradigm, featuring 327 structured reasoning traces with iterative problem-solving steps (including hypothesis testing, reflection, and backtracking).

\subsection{Candidate Data} The candidate data, collected for CPT and SFT, includes:
\begin{itemize}%[leftmargin=*,itemsep=0pt,topsep =0pt,partopsep=0pt]
\item \textbf{Code CPT data}: This dataset is constructed by integrating code samples collected from GitHub and the open-source the-stack v1 corpus \footnote{https://huggingface.co/datasets/bigcode/the-stack}. To ensure data quality, we employ an automated filtering pipeline leveraging the Ask-LLM \citep{Sachdeva2024HowTT} methodology. Specifically, a Qwen-1.8B \footnote{https://huggingface.co/Qwen/Qwen-1\_8B} model, fine-tuned on annotated data generated by a larger 14B labeling model, is utilized to remove low-quality content. The resulting pre-training corpus comprises approximately 250 million code samples (400B tokens), spanning 30 programming languages including Python, Java, C++, C\#, and Go. All subsequent pre-training phases are conducted exclusively on this curated dataset.
\item \textbf{Code SFT data}: This dataset consists of 1,800,000 instruction-response pairs from high-quality and diverse benchmarks, such as Evol-Instruct~\citep{xu2024wizardlm} and CODE-FEEDBACK~\citep{zheng2024opencodeinterpreter}, which encompass code generation and debugging scenarios.
\item \textbf{Math SFT data}: This dataset comprises 220,000 samples from OpenR1-MATH-220K~\citep{DeepSeekAI2025DeepSeekR1IR}, focusing on mathematical reasoning and problem-solving.
\end{itemize}
The detailed data statistics of the CPT and SFT data are presented in Tables~\ref{cpt_statistics}
and~\ref{sft_statistics}.

\subsection{Evaluation Benchmarks}
For the coding task, the benchmarks include:
\begin{itemize}
\item \textbf{HumanEval (HE/HE+)}~\citep{chen2021codex}: A widely used benchmark consisting of 164 hand-crafted Python programming problems, requiring models to generate functionally correct code snippets from natural language descriptions. Each problem includes a function signature and a docstring.
\item \textbf{MBPP/MBPP+}~\citep{Austin2021ProgramSW}: An evaluation benchmark containing 1,000 crowd-sourced Python programming problems that cover diverse tasks, focusing on programming fundamentals and standard library functionality.
\item \textbf{LiveCodeBench}~\citep{jain2025livecodebench}: This evaluation benchmark contains 1,000 diverse Python problems, continuously collecting new problems from contests across LeetCode, AtCoder, and CodeForces. To prevent data contamination, we used LiveCodeBench data collected from January 2025 to April 2025 for testing purposes, which is distinct from the data used in the reference dataset (May 2023 to March 2024).
\item \textbf{CRUXEval}~\citep{gu2024cruxeval}: This benchmark comprises 800 Python functions and input-output pairs, split into two tasks: CRUXEval-I (input prediction) and CRUXEval-O (output prediction).
\end{itemize}

For the math reasoning task, the benchmarks include:
\begin{itemize}
\item \textbf{GSM8K}~\citep{cobbe2021gsm8k}: This collection of 8,500 linguistically diverse grade school math word problems, created by human writers and released by OpenAI, is designed to evaluate multi-step reasoning.
\item \textbf{MATH500}~\citep{lightman2023lets}: This benchmark measures advanced mathematical problem-solving ability.
\item \textbf{AIME2024}\footnote{https://huggingface.co/datasets/Maxwell-Jia/AIME\_2024}: This is a curated collection of 30 challenging mathematical problems from the 2024 American Invitational Mathematics Examination (AIME). These problems include detailed problem statements, step-by-step solutions, and numerical answers, designed to evaluate LLMs' advanced reasoning capabilities in domains like algebra, geometry, and number theory.
\item \textbf{AMC23}\footnote{https://huggingface.co/datasets/zwhe99/amc23}: This comprises 49 problems from the 2023 American Mathematics Competitions (AMC), featuring challenging questions in algebra, geometry, number theory, and combinatorics designed for high school students. It serves as a valuable resource for educators, competition preparation, and AI research focused on mathematical reasoning and problem-solving.
\end{itemize}
Note that we utilize evaluation benchmarks that are relevant to the domain but extend beyond the specific tasks found in the reference data. This allows us to determine if the model, trained on the selected data, can effectively generalize to related tasks within the same field.

\subsection{Baselines}
To evaluate the effectiveness of LAMDAS in the CPT stage, we compare its performance against seven SOTA data selection methods:

\begin{table*}[!t]
\centering
\small

% Case study for "LAMDAS" and "PPL". A lower PPL only means the next tokens can be easily predicted. For example, repetitions of the strings can result in a low PPL. However, 
% in the PPL score. We show the data points with the low PPL scores among the continue pretraining dataset. A lower PPL score denotes a higher data quality given by Qwen2.5-Coder-7B model.

\begin{tabular}{p{1.0\linewidth}}
\toprule
%Examples of low PPL while bad case\\
%\midrule
\textbf{\textit{Code:}} "from setuptools import setup CLASSIFIERS = [ Programming Language :: Python :: 3.8, Programming Language :: Python :: 3.9, ] setup( name="pyLdtk", version=0.0.3, description="Just some quick and simple groundwork for working with ldtk in python. VERY EARLY STAGES!", long\_description=description, url="https://github.com/LGgameLAB/pyLdtk", author="Luke Gonsalves", author\_email="lukegonsalves07@gmail.com", license="MIT", packages=["pyLdtk"], classifiers=CLASSIFIERS, install\_requires=REQUIREMENTS, keywords="ldtk LDTK python pygame py Tiled tile tileset game")" \\
\textbf{\textit{[Likelihood Ratio]:}} 0.78 \\
\textbf{\textit{[PPL:]}} 3.78. \\ %\\
\midrule
%\textbf{\textit{\textcolor{brown}{Code:}}}  "from django.contrib import admin from .models import Employee \# Register your models here. class EmployeeAdmin(admin.ModelAdmin): list\_display = ["full\_name", "contact"] admin.site.register(Employee, EmployeeAdmin)" \\
%\textbf{\textit{\textcolor{red}{[Likelihood Ratio]:}}} 0.76 %\\
%\textbf{\textit{\textcolor{cyan}{[PPL:]}}} 3.34 \\\\
%\midrule
\textbf{\textit{Code:}} "from \_\_future\_\_ import absolute\_import
from highway\_env.envs.highway\_env import HighwayEnv
from highway\_env.envs.highway\_env\_continuous import HighwayEnvCon from highway\_env.envs.highway\_env\_continuous\_intrinsic\_rew import HighwayEnvCon\_intrinsic\_rew 
from highway\_env.envs.merge\_env import MergeEnv
from highway\_env.envs.roundabout\_env import RoundaboutEnv from highway\_env.envs.parking\_env import ParkingEnv from highway\_env.envs.two\_way\_env import TwoWayEnv from highway\_env.envs.roundabout\_env import RoundaboutEnv from highway\_env.envs.parking\_env import ParkingEnv from highway\_env.envs.two\_way\_env import\ TwoWayEnv from highway\_env.envs.roundabout\_env import RoundaboutEnv from highway\_env.envs.parking\_env import ParkingEnv from highway\_env.envs.two\_way\_env import TwoWayEnv" \\
\textbf{\textit{[Likelihood Ratio]:}} 0.92 \\
\textbf{\textit{[PPL:]}} 2.34 \\ %\\
\midrule
\textbf{\textit{Code:}} "import os import time os.system("qsub enc\_1") time.sleep(30) os.system("qsub enc\_2") time.sleep(30) os.system("qsub enc\_3") time.sleep(30) os.system("qsub enc\_4") time.sleep(30) os.system("qsub enc\_5") time.sleep(30) os.system("qsub enc\_6") time.sleep(30) os.system("qsub enc\_7") time.sleep(30) os.system("qsub enc\_8") time.sleep(30) os.system("qsub enc\_9") time.sleep(30) os.system("qsub enc\_10") time.sleep(30) os.system("qsub enc\_11") time.sleep(30) os.system("qsub enc\_12") time.sleep(30) os.system(\"qsub enc\_13\") time.sleep(30) os.system("qsub enc\_14") time.sleep(30) os.system("qsub enc\_15") time.sleep(30) os.system("qsub enc\_16") time.sleep(30) os.system("qsub enc\_17") time.sleep(30) os.system("qsub enc\_18") time.sleep(30) os.system("qsub enc\_19") time.sleep(30) os.system("qsub enc\_20") time.sleep(30) os.system("qsub enc\_21") time.sleep(30) os.system("qsub enc\_22") time.sleep(30) os.system("qsub enc\_23") time.sleep(30) os.system("qsub enc\_24") time.sleep(30) os.system("qsub enc\_25") time.sleep(30) os.system("qsub enc\_26") time.sleep(30) os.system("qsub enc\_27")" \\
\textbf{\textit{[Likelihood Ratio]:}} 0.55 \\
\textbf{\textit{[PPL:]}} 1.28 \\ %\\
\midrule
\textbf{\textit{Code:}} "def add\_native\_methods(clazz): def initIDs\_\_java\_lang\_Class\_\_java\_lang\_Class\_\_java\_
lang\_Class\_\_java\_lang\_Class\_\_java\_lang\_Class\_\_java\_lang\_Class\_\_java\_lang\_Class\_\_java\_lang\_Class\_\_java\_
lang\_Class\_\_java\_lang\_Class\_\_(a0, a1, a2, a3, a4, a5, a6, a7, a8, a9, a10, a11): raise NotImplementedError() def registerNativeLoops\_\_\_\_(a0): raise NotImplementedError() clazz.initIDs\_\_java\_lang\_Class\_\_java\_lang\_Class\_\_java\_lang\_Class\_\_java\_lang\_Class\_\_java\_lang\_Class
\_\_java\_lang\_Class\_\_java\_lang\_Class\_\_java\_lang\_Class\_\_java\_lang\_Class\_\_java\_lang\_Class\_\_java\_lang\_Class\_\_
=staticmethod(initIDs\_\_java\_lang\_Class\_\_java\_lang\_Class\_\_java\_lang\_Class\_\_java\_lang\_Class\_\_java\_lang\_
Class\_\_java\_lang\_Class\_\_java\_lang\_Class\_\_java\_lang\_Class\_\_java\_lang\_Class\_\_java\_lang\_Class\_\_java\_lang\_
Class\_\_) clazz.registerNativeLoops\_\_\_\_ = staticmethod(registerNativeLoops\_\_\_\_)" \\
\textbf{\textit{[Likelihood Ratio]:}} 0.57 \\
\textbf{\textit{[PPL:]}} 2.30 \\ %\\
\midrule
\end{tabular}
\caption{\label{case_study} Samples selected by the "PPL" method but filtered out by LAMDAS in the CPT experiment. A lower PPL value typically indicates that the next tokens are easier to predict, often due to the presence of repetitive strings or patterns, or that the corresponding sample frequently appears in the pretraining corpus. As a result, this method may inadvertently select samples that do not enhance overall quality.}
\end{table*}

\begin{itemize}
\item \textbf{DEITA}~\citep{liu2024what}: Evol-instruct is employed to generate data with different quality and complexity. ChatGPT is then exploited to score the quality and complexity of the synthesized data. Finally, two LLM-based scorers, trained on the ChatGPT-scored data, assess the quality and complexity of each sample within the candidate set. For CPT, we use the scorers provided in XCoder \citep{wang-etal-2024-code}. For SFT, we train two separate scorers based on the Qwen2.5-0.5B model.
\item \textbf{DsDm}~\citep{10.5555/3692070.3692568}: This method implements the TRAK framework \citep{park2023trak} to estimate the optimal weight for each data point in the selection process. TRAK employs a closed-form expression that requires first training $m = 4$ reference models on the reference dataset, then collecting projected gradients for each model, and finally substituting these gradients into the closed-form expression.
%It first learns a small proxy model to approximate the influence scores of a subset of candidate data on the loss of the reference set, and then employs this proxy model to select the data from the entire candidate set. 
%dynamically optimizes pretraining efficiency for language models by continuously adapting data selection to the evolving preferences of the model through a small, fine-tuned data influence model, which predicts the most impactful training examples for each pretraining stage.
\item \textbf{DSIR}~\citep{xie2023data}: This method uses importance resampling to select data points that better match the reference distribution, based on n-gram features and the KL reduction metric.
\item \textbf{TSDS}~\citep{liu2024tsds}: This method optimizes task-specific data selection by recasting it as an optimal transport problem using text embeddings as features. It further incorporates a diversity-promoting regularizer, based on kernel density estimation, to mitigate near-duplicate effects.
\item \textbf{Perplexity-based Filtering (PPL)}: This method filters data based on perplexity scores, computed using a Qwen2.5-Coder-7B model fine-tuned on our reference data, selecting samples with lower perplexity values. We also present some examples given by this method in Table~\ref{case_study}.
\item \textbf{Random Sampling (Random)}: This method selects data points randomly from the candidate dataset.
\item \textbf{Full Data Training (Full)}: This method uses the entire candidate dataset for training without any selection.
\end{itemize}

\subsection{Implementation Details of LAMDAS} The prefix tuning is performed on a single NVIDIA A100 GPU (80GB) using the AdamW optimizer with a learning rate of 1e-3 and a weight decay of 0.1. This process runs for 10 epochs with a batch size of 4, consuming approximately 1 hour for the domain-specific reference datasets. For data selection, we utilize eight A100 GPUs with the same batch size of 4. This setup allows us to process 100 billion tokens in 35 hours, equating to about 5 seconds per 100 samples. During the CPT stage, we employ 64 A100 GPUs, using a batch size of 8 per device, a learning rate of 1e-4, and a weight decay of 0.1. Training runs for 4 epochs until early stopping, with a dataset of 15 billion selected tokens. For the SFT stage, we utilize 32 A100 GPUs with a batch size of 8 per device, a learning rate of 1e-5, and a weight decay of 0.1. Training again runs for 4 epochs until early stopping, using a dataset comprising 750,000 code samples and 70,000 math samples.

\begin{table*}[t]
\centering
\small

\centering
\resizebox{1.0\textwidth}{!}{
\begin{tabular}{cccccccc}
\toprule
Methods &Data Size  & Models & MATH500 & AIME  & GSM8K & AMC & AVG\\
\midrule
\multirow{4}{*}{LAMDAS(ours)} & \multirow{4}{*}{70K} & Qwen2.5-0.5B-Instruct & { {\textbf{15.2}}} & { {\textbf{6.7}}} & { {\textbf{17.3}}} & {{\textbf{12.5}}}& { {\textbf{12.9}}}\textit{\scriptsize{+18.3\%}} \\

& & Qwen2.5-1.5B-Instruct & { {\textbf{18.4}}}& { {\textbf{13.3}}} & { {\textbf{27.9}}}& { {\textbf{24.5}}}& { {\textbf{21.0}}}\textit{\scriptsize{+22.1\%}}\\

& & Qwen2.5-7B-Instruct   & { {\textbf{29.7}}} & { {\textbf{26.7}}}& { {\textbf{39.8}}}& { {\textbf{38.8}}}& { {\textbf{33.8}}}\textit{\scriptsize{+24.3\%}}\\
& & Qwen2.5-32B-Instruct & { {\textbf{43.1}}} & { {\textbf{33.3}}} & { {\textbf{62.7}}} & { {\textbf{57.1}}} & { {\textbf{49.1}}}\textit{\scriptsize{+9.3\%}}\\

\midrule
\multirow{4}{*}{DSIR \citep{xie2023data}} & \multirow{4}{*}{70K} & Qwen2.5-0.5B-Instruct & 10.2 & 3.3 & 14.3 & 6.1 & 8.4\textit{\scriptsize{-22.9\%}}\\
& & Qwen2.5-1.5B-Instruct & 16.8 & 3.3 & 23.1 & 22.5 &16.4\textit{\scriptsize{-4.7\%}}\\
& & Qwen2.5-7B-Instruct & 28.5 & 13.3 & 36.2 & 26.5&26.1\textit{\scriptsize{-4.0\%}}\\
& & Qwen2.5-32B-Instruct & 38.1 & 30.0 & 58.2 & 53.0 & 44.8\textit{\scriptsize{-0.1\%}} \\

\midrule
\multirow{4}{*}{TSDS \citep{liu2024tsds}} &\multirow{4}{*}{70K}& Qwen2.5-0.5B-Instruct & 10.7 & 0.0 & 15.2 & 6.1 &8.0\textit{\scriptsize{-26.6\%}}\\
& & Qwen2.5-1.5B-Instruct & 16.1 & 3.3 & 23.7 & 15.9&14.8\textit{\scriptsize{-14.1\%}}\\
&  & Qwen2.5-7B-Instruct & 28.7 & 13.3 & 37.4 & 26.5 &26.4\textit{\scriptsize{-2.9\%}}\\
& & Qwen2.5-32B-Instruct & 38.1 & 30.0 & 58.9 & 55.1 & 45.5\textit{\scriptsize{+1.8\%}} \\

\midrule
\multirow{4}{*}{DEITA \citep{liu2024what}}& \multirow{4}{*}{70K}  & Qwen2.5-0.5B-Instruct & 12.9 & 3.3 & \underline{16.7} & 8.1 & 10.3\textit{\scriptsize{-5.5\%}}\\
& & Qwen2.5-1.5B-Instruct & 17.6 & 6.7 & 24.5 & 18.4 & 16.8\textit{\scriptsize{-2.3\%}}\\
 &  & Qwen2.5-7B-Instruct & 28.7 & 20.0 & 37.8 & 30.6& 29.3\textit{\scriptsize{+7.7\%}}\\
& & Qwen2.5-32B-Instruct & 40.6 & 26.7 & 59.8 & 55.1 & 45.6\textit{\scriptsize{+1.6\%}} \\

\midrule
\multirow{4}{*}{DsDm \citep{10.5555/3692070.3692568}}& \multirow{4}{*}{70K}  & Qwen2.5-0.5B-Instruct & 13.1 & 3.3 & 16.5 & 9.8 & 10.7\textit{\scriptsize{-1.8\%}} \\
& & Qwen2.5-1.5B-Instruct & \underline{18.1} & 6.7 & \underline{24.7} & \underline{21.6} & \underline{17.8}\textit{\scriptsize{+3.5\%}} \\
 & & Qwen2.5-7B-Instruct & \underline{28.5} & 16.7 & 37.1 & \underline{33.8} & 29.0\textit{\scriptsize{+6.6\%}} \\
& & Qwen2.5-32B-Instruct & 40.2 & 26.7 & 59.7& \underline{56.2} & 45.7\textit{\scriptsize{+1.8\%}}\\

\midrule
\multirow{4}{*}{Nuggets \citep{Li2023OneSL}}& \multirow{4}{*}{70K}  & Qwen2.5-0.5B-Instruct & \uwave{13.7} & \underline{3.3} & 16.5 & \underline{10.2} & \underline{10.9}\textit{\scriptsize{+0.0\%}}\\

& & Qwen2.5-1.5B-Instruct & \underline{17.8} & \underline{6.7} & 24.2 & 20.4 & 17.4\textit{\scriptsize{+1.2\%}}\\

 &  & Qwen2.5-7B-Instruct & 27.8 & \underline{20.0} & \underline{37.7} & 32.7 &\underline{29.6}\textit{\scriptsize{+8.8\%}} \\
& & Qwen2.5-32B-Instruct &\underline{40.9} & \underline{26.7} & \underline{60.3} & 55.1 & \underline{45.8}\textit{\scriptsize{+2.0\%}} \\

\midrule
\multirow{4}{*}{LESS \citep{xia2024less}}& \multirow{4}{*}{70K}  & Qwen2.5-0.5B-Instruct & \underline{13.5} & \uwave{3.3} & \uwave{16.9} & \uwave{10.2} & \uwave{11.0}\textit{\scriptsize{+0.9\%}}\\

& & Qwen2.5-1.5B-Instruct & \uwave{18.1} & \uwave{10.0} & \uwave{24.8} & \uwave{22.4} & \uwave{18.8}\textit{\scriptsize{+9.3\%}}\\

 &  & Qwen2.5-7B-Instruct & \uwave{29.1} & \uwave{20.0} & \uwave{38.5} & \uwave{36.7} & \uwave{31.1}\textit{\scriptsize{+14.3\%}}\\
 
& & Qwen2.5-32B-Instruct & \uwave{41.2} & \uwave{30.0} & \uwave{60.8} & \uwave{57.1} & \uwave{47.3}\textit{\scriptsize{+5.3\%}}\\

\midrule
\multirow{4}{*}{Random} & \multirow{4}{*}{70K} & Qwen2.5-0.5B-Instruct & 8.6 & 0.0 & 13.3 & 6.1 &7.0\textit{\scriptsize{-35.7\%}}\\
& & Qwen2.5-1.5B-Instruct & 14.2 & 3.3 & 21.7 & 12.2 & 12.9\textit{\scriptsize{-25.0\%}}\\
& & Qwen2.5-7B-Instruct & 26.6 & 16.7 & 34.1 & 30.6 & 27.0\textit{\scriptsize{-0.7\%}}\\
& & Qwen2.5-32B-Instruct & 35.4 & 26.7 & 54.2 & 40.8 & 39.3\textit{\scriptsize{-14.2\%}}\\

\midrule
\multirow{4}{*}{Full} & \multirow{4}{*}{220K}  & Qwen2.5-0.5B-Instruct & 14.1 & 3.3 & 17.9 & 8.1 & 10.9\\
& & Qwen2.5-1.5B-Instruct & 17.4 & 6.6 & 26.3 & 18.4 & 17.2\\
& & Qwen2.5-7B-Instruct & 26.2 & 16.7 & 35.3 & 30.6 & 27.2\\    
& & Qwen2.5-32B-Instruct & 37.7 & 33.3 & 57.7 & 51.0 & 44.9 \\

\midrule
\multirow{4}{*}{Base}& \multirow{4}{*}{220K}  & Qwen2.5-0.5B-Instruct & 5.4 & 0.0 & 10.7 & 0.0 & 4.0 \\
& & Qwen2.5-1.5B-Instruct & 7.8 & 0.0 & 15.2 & 4.1 & 6.3\\
 & & Qwen2.5-7B-Instruct & 20.3& 6.7 & 27.8 & 8.2 & 15.8 \\  
& & Qwen2.5-32B-Instruct & 34.8 & 16.7 & 45.2 & 14.3 & 27.8\\
\bottomrule
\end{tabular}}
\caption{\label{SFT_math_results}Performance of Qwen2.5 serious models trained on various data selection methods on MATH500, AIME, GSM8K, and AMC. We also compute the average of models' performance on all benchmarks and calculate the proportion of improvement compared with the "Full" data. The best results are in {\textbf{bold}}, the second are \uwave{underlined}, and the third are \underline{highlighted}.}
\end{table*}

\section{Math} 
\label{app:math}

This section examines the application of data selection in mathematical reasoning tasks. As highlighted in LIMO~\citep{Ye2025LIMOLI} and O1-journey~\citep{Qin2024O1RJ}, creating high-quality reasoning data for SFT is a challenging process that often involves complex and extensive pipelines. Given the difficulty in curating such data, employing data selection techniques to identify similar high-quality data becomes crucial. By leveraging the curated data in LIMO and O1-journey as a reference set, we conduct data selection using various methods and evaluated the resulting models on benchmarks designed to assess mathematical reasoning abilities. As shown in Table~\ref{SFT_math_results}, our proposed method consistently achieves the best performance, with the Qwen2.5-32B-Instruct model trained on data selected by LAMDAS outperforming the same model trained on the "Full" dataset by an average of 9.3\%. This demonstrates LAMDAS's broad applicability across various domains and tasks. 

The performance trends for the baseline methods align with those observed in coding tasks, and therefore, their detailed discussion is omitted here. However, unlike the coding tasks, we find that the data selected by some methods does not improve performance as much as the full dataset, particularly when training smaller LLMs. This may be because the original candidate data is already of high quality and relevant to the math reasoning task, leading to potential performance drops if the data is not carefully selected. This further underscores the effectiveness of LAMDAS, which can enhance performance using less than 30\% of the full dataset. Moreover, larger models benefit from extensive pretraining, allowing them to activate vast amounts of knowledge when fine-tuned with a small selection of data, thereby improving performance on downstream tasks without risking overfitting~\citep{chen2023maybe, zhou2023lima}. In contrast, smaller models have limited learning capacities. Fine-tuning on a larger, diverse dataset enables them to absorb more information and achieve better overall performance, even with some irrelevant data. This explains the more significant performance drop observed in smaller models trained with selected data. Notably, data selected by LAMDAS, as well as by LESS and Nuggets, achieves performance gains over the full dataset regardless of model size, highlighting the robustness of these methods.

 \begin{figure}[t]
    \centering
    \adjustbox{center}{\includegraphics[width=0.6\linewidth, height=0.375\textwidth]{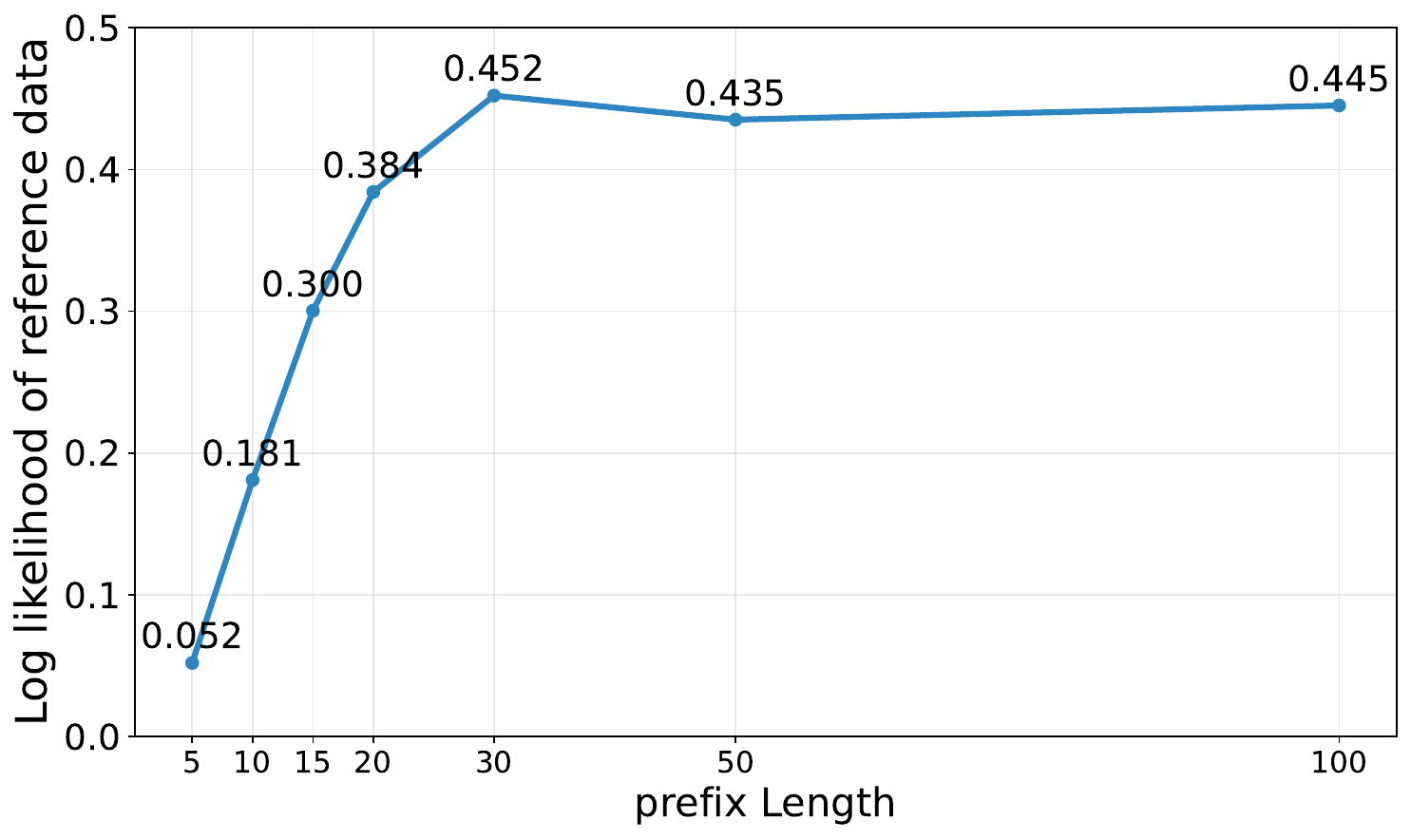}}
    \caption{Impact of the prefix lengths on the log-likelihood of the reference data. The results indicate that a prefix length of 30 yields the highest log-likelihood.}
    \label{prefix_length}
\end{figure}

\begin{figure*}%[t]
    \centering
    \adjustbox{center}{\includegraphics[width=1.05\linewidth]{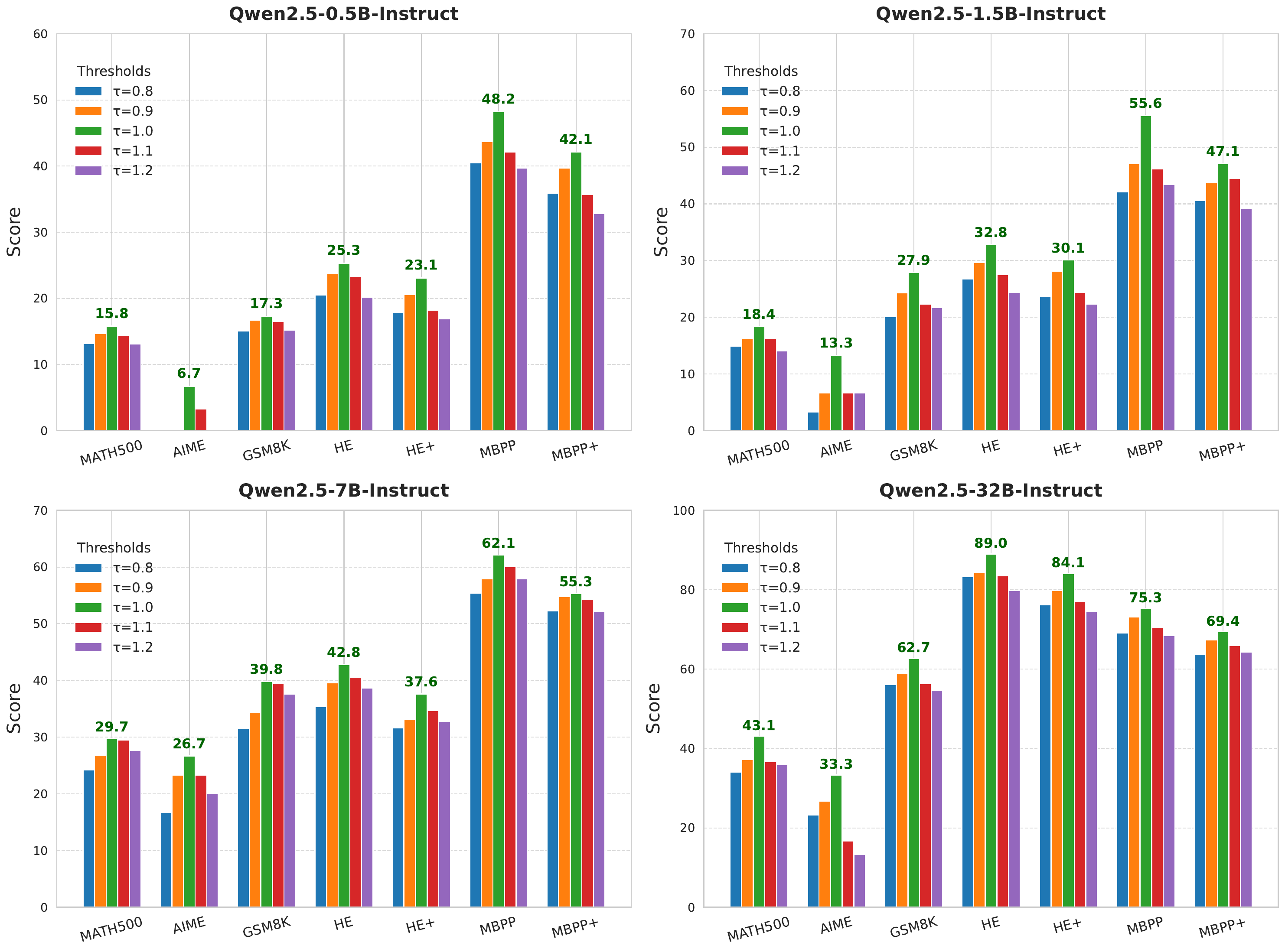}}
    \caption{\label{ablation}Ablation Studies on the impact of varying thresholds on the selected SFT data resulting from LAMDAS. Each threshold results in different subset sizes. Specifically, at thresholds of 0.8, 0.9, 1.0, 1.1, and 1.2, the corresponding data retention ratios are approximately 50\%, 40\%, 30\%, 10\%, and 5\%.}
\end{figure*}

\begin{table*}[t]
\centering
\small

\begin{tabular}{lllllllll}
\toprule
Data Selection Models      & Data Size  & Speed s/k & HE   & HE+  & MBPP & MBPP+ & LCB & CRUXEval \\
\midrule
Qwen2.5-Coder-0.5B & 750K & 38s & 25.3 & 23.1 & 48.2 & 42.1  & 8.7 & 15.1 \\
Qwen2.5-Coder-3B   & 739K & 65s & 25.8 & 23.5 & 48.5 & 42.7 & 8.8 & 15.3 \\
Qwen2.5-0.5B   & 842K  &34s & 21.1 & 18.7 & 44.3 & 41.6  & 7.1 & 13.5 \\
\bottomrule
\end{tabular}
\caption{\label{model_size_type}Impact of reference model size. The selected data are used to train the Qwen2.5-0.5B-Instruct model.}
\end{table*}

\section{Ablation Studies}
\label{app:ablation}

\subsection{Impact of the selection threshold $\tau$} As shown in Figure~\ref{ablation}, a high threshold over-filters valuable examples, reducing diversity and harming generalization, while a low threshold introduces noisy data, degrading performance. An optimal threshold of 1.0 balances quality and diversity, leading to peak performance across tasks regardless of the underlying model for SFT. 

\subsection{Impact of the prefix length} Figure \ref{prefix_length} illustrates the relationship between prefix length and the log likelihood on reference data. The results indicate that a prefix length of approximately 30 maximizes this likelihood. Shorter prefix lengths are insufficient to accurately capture the characteristics of the reference data. In contrast, lengths exceeding 30 do not provide any further significant increase in data likelihood.

\subsection{Impact of the model size} As presented in Table~\ref{model_size_type}, using the Qwen2.5-Coder models with 0.5B and 3B parameters as the implicit classifier produces a comparable performance, with only marginal improvements observed with the larger model. Leveraging a smaller LLM for classification, in turn, further enhances the efficiency of LAMDAS.

\subsection{Impact of the model type} We investigate the impact of utilizing LLMs pre-trained on datasets from different domains as classifiers. The results presented in Table~\ref{model_size_type} show that employing code LLMs for data selection within coding domains enhances performance. This improvement can be attributed to the specialized knowledge these pretrained models possess, making them more effective classifiers for tasks in the coding domain.

\clearpage

\end{document}